\newtheorem{claim}{Claim}
\newtheorem*{theorem*}{Theorem}
\newtheorem*{lemma*}{Lemma}
\DeclareMathOperator*{\Ex}{\text{E}}
\DeclareMathOperator{\Tr}{Tr}
\DeclareMathOperator{\DDC}{\textnormal{DDC}}
\title{Comparing Text Representations: A Theory-Driven Approach}
\author{Gregory Yauney \\ Cornell University \\  \texttt{gyauney@cs.cornell.edu}
        \And
        David Mimno \\ Cornell University \\ \texttt{mimno@cornell.edu}}
\begin{document}
\maketitle
\begin{abstract}
Much of the progress in contemporary NLP has come from learning representations, such as masked language model (MLM) contextual embeddings, that turn challenging problems into simple classification tasks. But how do we quantify and explain this effect? We adapt general tools from computational learning theory to fit the specific characteristics of text datasets and present a method to evaluate the compatibility between representations and tasks. Even though many tasks can be easily solved with simple bag-of-words (BOW) representations, BOW does poorly on hard natural language inference tasks. For one such task we find that BOW cannot distinguish between real and randomized labelings, while pre-trained MLM representations show 72x greater distinction between real and random labelings than BOW. This method provides a calibrated, quantitative measure of the difficulty of a classification-based NLP task, enabling comparisons between representations without requiring empirical evaluations that may be sensitive to initializations and hyperparameters. The method provides a fresh perspective on the patterns in a dataset and the alignment of those patterns with specific labels.
\end{abstract}

\begin{figure*}[!t]
         \centering
         \includegraphics[width=\textwidth]{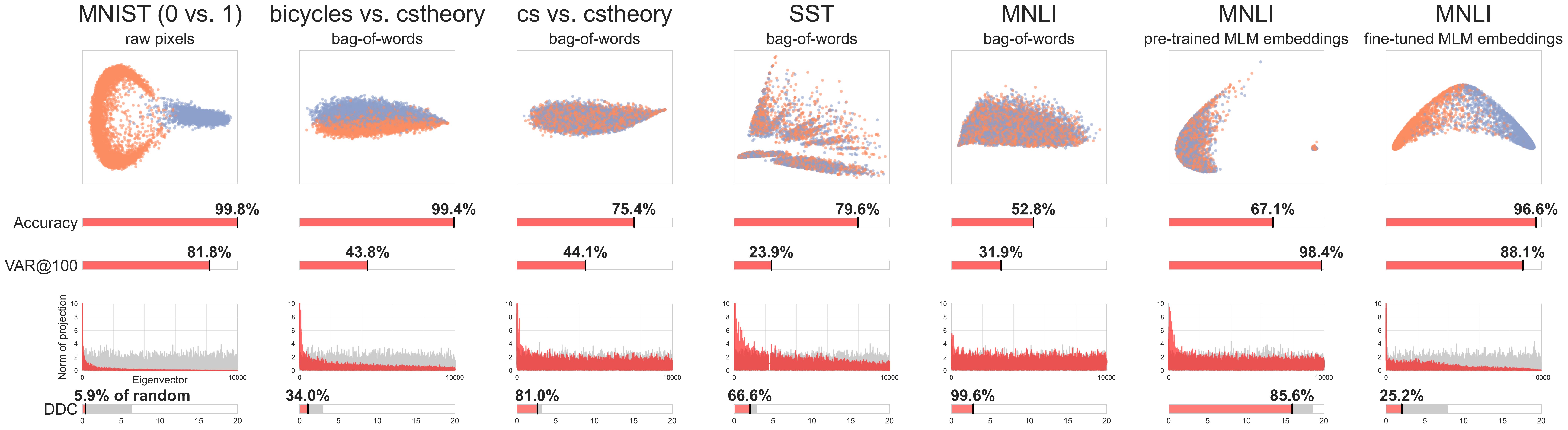}
        \caption{DDC shows how fine-tuning MLM embeddings turns a hard problem (NLI) into an easy problem. At the top, we show 2D PCA plots of the DDC Gram matrix for five classification problems, from easy (MNIST) to hard (MNLI), along with two MLM-based representations of MNLI. We then show empirical dev-set accuracy and the fraction of variance explained by the first 100 eigenvectors. DDC measures the projection of the labels onto each eigenvector (red histogram), scaled by the inverse of the eigenvalue. The bottom row shows DDC as a proportion of DDC for random labels (gray histogram). Fine-tuned embeddings turn MNLI from a task that is indistinguishable from random guessing into one that is as easy as telling if a post is about bicycles or CS theory.}
        \label{fig:pca-projection-dashboard}
\end{figure*}

\section{Introduction}


A common theme in contemporary machine learning is representation learning: a task that is complicated and difficult can be transformed into a simple classification task by filtering the input through a deep neural network.
For example, we know empirically that it is difficult to train a classifier for natural language inference (NLI)---determining whether a sentence logically entails another---using bag-of-words features as inputs, but training the same type of classifier on the output of a pre-trained masked language model (MLM) results in much better performance \cite{liu2019roberta}.
Fine-tuned representations do even better.
%
But why is switching from raw text features to MLM contextual embeddings so successful for downstream classification tasks?
Probing strategies can map the syntactic and semantic information encoded in contextual embeddings \cite{rogers2020primer}, but it remains difficult to compare embeddings for classification beyond simply measuring differences in task accuracy.
What makes a given input representation easier or harder to map to a specific set of labels?

In this work we adapt a tool from computational learning theory, data-dependent complexity (DDC) \cite{arora2019fine}, to analyze the properties of a given text representation for a classification task.
Given input vectors and an output labeling, DDC provides theoretical bounds on the performance of an idealized two-layer ReLU network.
At first, this method may not seem applicable to contemporary NLP: this network is simple enough to prove bounds about, but does not even begin to match the complexity of current Transformer-based models.
Although there has been work to extend the analysis of \citet{arora2019fine} to more complicated networks \cite{allen2018learning}, the simple network \textit{is} a good approximation for a task-specific classifier head. We therefore take a different approach, and use DDC to measure the properties of \textit{representations} learned by networks, not the networks themselves.
This approach does not require training any actual classification models, and is therefore not dependent on hyperparameter settings, initializations, or stochastic gradient descent.

Quantifying the relationship between representations and labels has important practical impacts for NLP.
Text data has long been known to differ from other kinds of data in its high dimensionality and sparsity \cite{joachims2001statistical}.
We analyze the difficulty of NLP tasks with respect to two distinct factors: the complexity of patterns in the dataset, and the alignment of those patterns with the labels.
Better ways to analyze relationships between representations and labels may enable us to better handle problems with datasets, such as ``shortcut'' features that are spuriously correlated with labels \cite{gururangan2018annotation, thompson2018authorless, geirhos2020shortcut,le2020adversarial}.



Our contributions are the following.
First, we identify and address several practical issues in applying DDC for data-label alignment in text classification problems, including better comparisons to ``null'' distributions to handle harder classification problems and enable comparisons across distinct representations.
Second, we define three evaluation patterns that provide calibrated feedback for data curation and modeling choices: 
For a given representation (such as MLM embeddings), are some labelings more or less compatible with that representation?
For a given target labeling, is one or another representation more effective?
How can we measure and explain the difficulty of text classification problems between datasets?
Third, we provide case studies for each of these usages. In particular, we use our method to quantify the difference between various localist and neural representations of NLI datasets for classification, identifying differences between datasets and explaining the difference between MLM embeddings and simpler representations.\footnote{Code is available at: \texttt{\url{https://github.com/gyauney/data-label-alignment}}}

\section{Data-Dependent Complexity}

Data-dependent complexity \cite{arora2019fine} combines measurements of two properties of a binary-labeled dataset: the strength of patterns in the input data and the alignment of the output labels with those patterns.
Patterns in data are captured by a pairwise document-similarity (Gram) matrix.
An eigendecomposition is a representation of a matrix in terms of a set of basis vectors (the eigenvectors) and the relative importance of those vectors (the eigenvalues).
If we can reconstruct the original matrix with high accuracy using only a few eigenvectors, their corresponding eigenvalues will be large relative to the remaining eigenvalues.
A matrix with more complicated structure will have a more uniform sequence of eigenvalues.
DDC measures the projections of the label vector onto each eigenvector, scaled by the inverse of the corresponding eigenvalue. A label vector that can be reconstructed with high accuracy using only the eigenvectors with the largest eigenvalues will therefore have low DDC, while a label vector that can only be reconstructed using many eigenvectors with small eigenvalues will have high DDC.

\paragraph{Motivating examples.}
Figure~\ref{fig:pca-projection-dashboard} shows PCA plots
of Gram matrices for five datasets. Each point represents a document, colored by its label.
As an informal intuition, if we can linearly separate the classes using this 2D projection, the dataset will definitely have low DDC.
DDC can provide a perspective on difficulty beyond just comparing accuracy, especially when using a powerful classifier, where accuracies can be nearly perfect even for complicated problems. The MNIST digit classification dataset \cite{lecun1998gradient} and an intentionally easy text dataset (distinguishing posts from Stack Exchange forums on bicycles and CS theory) are two tasks
on which the simple network studied by \citet{arora2019fine} achieves high accuracy: 99.8\% and 99.4\%, respectively.
MNIST is relatively simple: 81.8\% of the variance is explained by the first 100 eigenvectors.
{ DDC is low for MNIST because the dominant pattern of the dataset aligns with the labels. Since the eigenvalues decay quickly, their inverses increase quickly, but the label vector projects only onto the top few eigenvectors. Any \textit{other} label vector would likely have much higher DDC.}
Bicycles vs. CS theory, while also simple, is more complicated from an eigenvector perspective, with only 43.8\% of variance explained.
Even though the eigenvalues decay more slowly, the labels project onto enough lower-ranked eigenvectors that DDC is higher than in MNIST.
Both MNIST and Bicycles vs. CS theory are easy in this operational sense, but DDC nevertheless shows there is a meaningful difference when accuracy saturates: more complicated patterns must be fit in order to learn the Bicycles vs. CS theory task to high accuracy.

MNLI \cite{N18-1101} is a much harder text classification dataset. We get lower accuracy for simple networks trained on two representations, bag-of-words and pre-trained MLM embeddings.
{ In this case differences in accuracy are more informative, but DDC still provides additional information, provided that we contextualize it.
Comparing raw DDC values seems to contradict accuracy: the task with a bag-of-words representation has a DDC of 2.8 while pre-trained embeddings produce a DDC of 15.9.
In this case, DDC is higher not because it is putting more weight on lower-ranked eigenvectors (the opposite is true), but because the eigenvalues for the pre-trained embeddings drop more quickly: 98.4\% of variance is explained by the first 100 eigenvectors.
To account for this difference, it is necessary to calibrate DDC by normalizing relative to the DDC of random labelings.
The relative gap between the DDC of a real labeling and DDC for a random labeling is much larger for MNLI under pre-trained MLM embeddings: BOW is indistinguishable from random labels (as the near-50\% accuracy suggests) while pre-trained embeddings distinguish MNLI labels from random labels at above-random performance.}
MNLI using fine-tuned MLM embeddings, finally, has both low eigenvector complexity (88.1\% variance explained) and allows for almost perfect classification accuracy with low relative DDC.

\paragraph{From dataset to data-dependent complexity.}
The complexity of classification tasks is studied in computational learning theory.
Rademacher complexity goes beyond the worst-case characterization of VC-Dimension to measure 
the gap between how well a family of classifiers can fit arbitrary labels for a fixed set of inputs and how well the classifier fits the given real labels of those inputs \cite{shalev2014understanding}.\footnote{Note that we are not referring to \textit{linguistic} complexity of text, as in \citet{bentz2016comparison, bentz2017entropy, gutierrez2020productivity}.}
In this work, we turn this around and compare the capacity of a fixed classifier to fit arbitrary labels for different \textit{input representations}, including multiple representations of the same data.
{Downsides of calculating Rademacher complexity directly are 1) in the general setting it requires taking a supremum over the family of classifiers and 2) it will trivially saturate if the dataset is smaller than the classifier's VC-Dimension.}
\citet{arora2019fine}
show that for large-width two-layer ReLU networks, the projections of labels onto the eigenvectors of a Gram matrix govern both generalization error and the rate of convergence of SGD.
Similar spectral analysis of Gram matrices has long been used in kernel learning \cite{cristianini2001kernel}.

The foundation of the \citet{arora2019fine} data-dependent complexity measure is the Gram matrix that measures similarity between documents.
As a model we use an overparameterized two-layer ReLU network to ensure comparability with prior work.
Let $\mathbf{x}_i$ be the $\ell_2$-normalized representation of the $i^\text{th}$ document out of $n$, and $y_i$ be the label for that document.
We construct this matrix of pairwise document similarities under a ReLU kernel, where the similarity between documents $\mathbf{x}_i$ and $\mathbf{x}_j$ is
\begin{align}
    \mathbf{H}_{ij}^\infty & = \frac{\mathbf{x}_i^\intercal \mathbf{x}_j \, \big(\pi - \arccos (\mathbf{x}_i^\intercal \mathbf{x}_j )\big) }{2 \pi}.
\end{align}
{This kernel is discussed in more detail in \citet{arora2019fine} and \citet{xie2017diverse}.}
Letting $\mathbf{Q\Lambda Q}^\intercal$ be the eigendecomposition of $\mathbf{H}^\infty$ and using the identity that $(\mathbf{Q}\mathbf{\Lambda}\mathbf{Q}^\intercal)^{-1} = \mathbf{Q} \mathbf{\Lambda}^{-1} \mathbf{Q}^\intercal$, DDC is
\begin{align}
\text{DDC} & = \sqrt{\frac{2 \mathbf{y}^\intercal (\mathbf{H}^\infty)^{-1} \mathbf{y}}{n}} \\
& = \sqrt{\frac{2 (\mathbf{y}^\intercal \mathbf{Q})\mathbf{\Lambda}^{-1}(\mathbf{Q}^\intercal \mathbf{y})}{n}}. \label{eqn:eigenvectors}
\end{align}
We refer to $\mathbf{y}^\intercal \mathbf{Q}$ as the \textit{projections} of a label vector onto the eigenvectors of the Gram matrix.
We call a task's labeling $\mathbf{y}$ the \textit{real} labeling.
\citet{arora2019fine} show that lower DDC implies faster convergence and lower generalization error.

\begin{figure}[!t]
         \centering
         \includegraphics[width=\columnwidth]{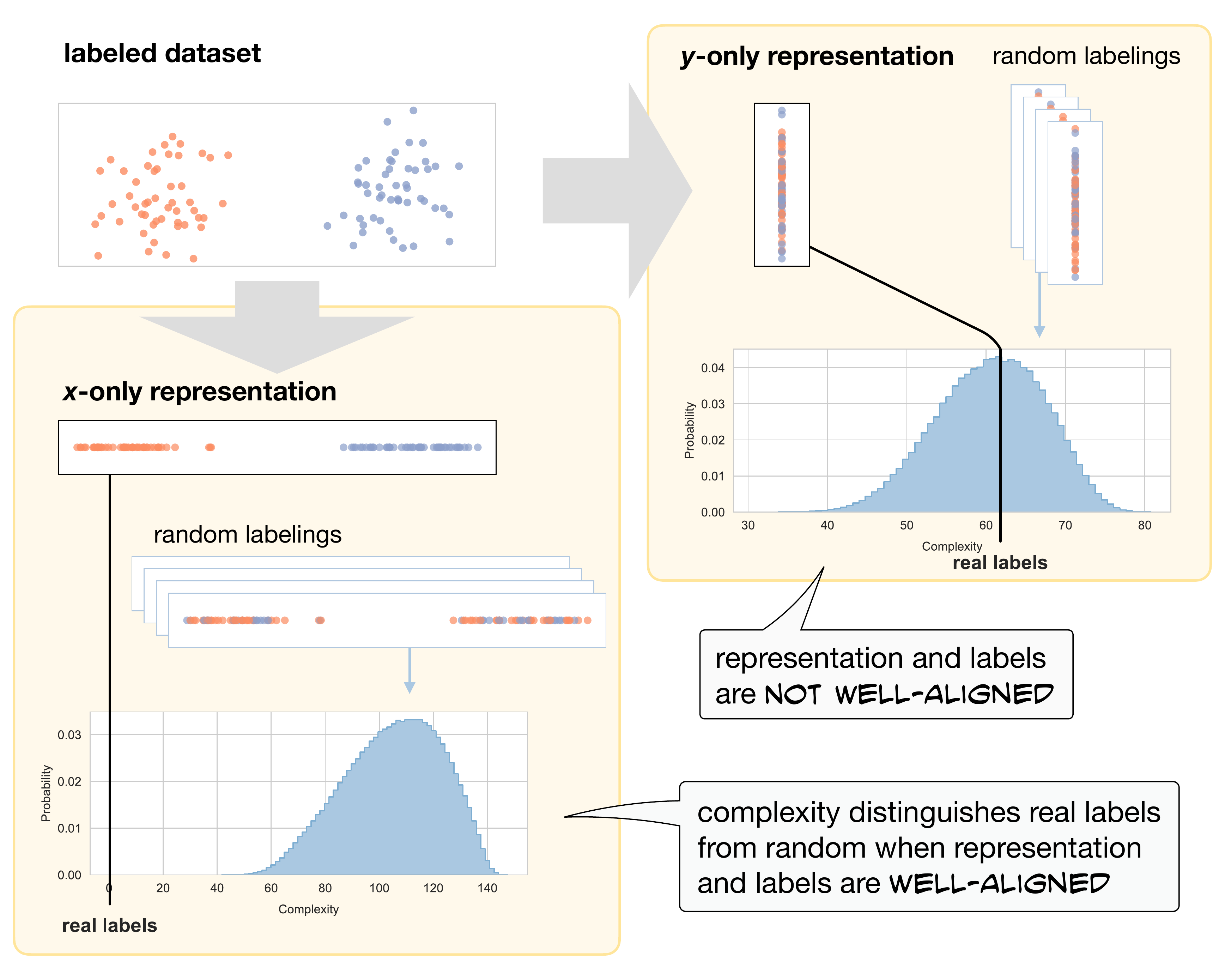}
        \caption{Intuition: data-label alignment as a way to compare representations. For this labeling of these points, the $x$-only representation is well-aligned with the labeling, as there is a large gap between the DDCs of the real labeling and random labelings. The $y$-only representation does not distinguish between real and random labelings. 
        }
        \label{fig:overview}
\end{figure}

\section{Making Data-Dependent Complexity a Practical Tool for Text Data}
\label{sect:practical-tool}


Unlike previous work, our goal is not to prove theoretical bounds on neural networks, but to evaluate the theoretical properties of different \textit{representations} of datasets.
Rather than compare DDC across representations directly, { data-label alignment takes inspiration from Rademacher complexity and} compares the gap between DDC of real and random labelings to account for different embedding spaces.
Figure~\ref{fig:overview} provides a simplified view of this approach, showing the DDC of real labels relative to DDC for random labelings for two trivial representations of a synthetic dataset.
We also find that several additional adaptations from \citet{arora2019fine} are required to make DDC an effective tool for text datasets.
Subsampling large datasets can provide good approximations with reduced computational cost,
and we show that
duplicate documents, which are more likely in text than in images, can significantly affect DDC if not handled.



\paragraph{Difficult datasets require distributions of random labels.}
We recommend comparing the DDC of the real labeling to the distribution of DDCs from many random labelings.
For easier tasks, there is wide separation between DDC values for real and random labels,
as \citet{arora2019fine} show for the MNIST 0 vs. 1 task.
For more difficult tasks, comparing the real labeling to only one random labeling could result in wildly different answers.
In a sample from the MNLI dataset with text represented as bags-of-words, for example, 30\% of random labelings had \textit{lower} complexity than the real labelings (Figure~\ref{fig:mnli-bow-histogram}).\footnote{Every MNLI subsample we examined had at least some random labelings with lower DDC than the real labeling.} 
Appendix~\ref{sec:app-bound} gives a bound on the number of random labelings required to get an accurate estimate of the expected DDC of a random labeling that is mainly determined by the gap between the inverses of the largest and smallest eigenvalues of the Gram matrix.
In our experiments, the number of random labelings ranges from a few hundred to several thousand; once eigenvectors have been calculated these are easily evaluated. We refer to the sampled estimate of expected DDC over random labelings as $\Ex\!\left[\text{DDC}\right]$.

\begin{figure}[!t]
         \centering
         \includegraphics[width=\columnwidth]{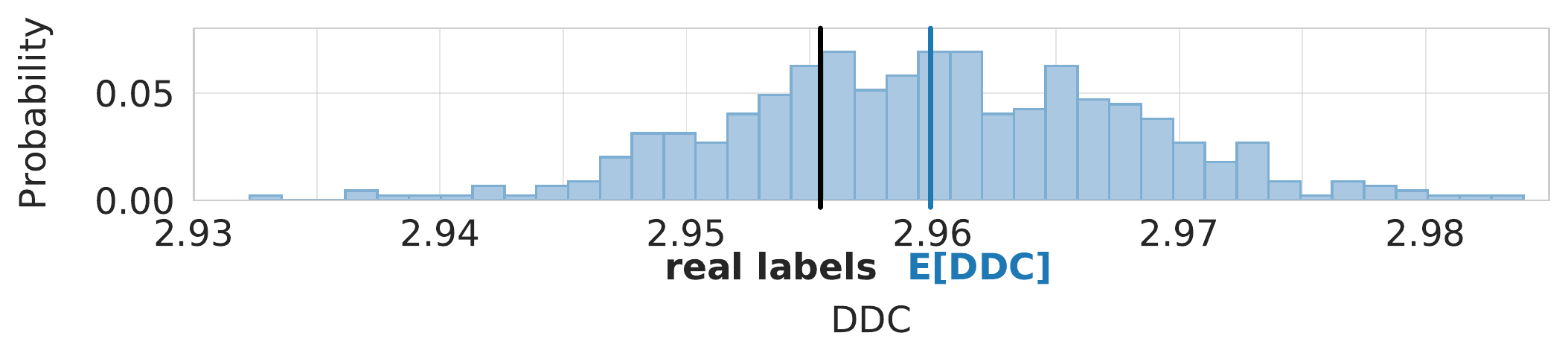}
        \caption{The DDC of nearly 30\% of sampled random labelings is less than that of the real labeling for a sample of the MNLI dataset represented by bags-of-words.}
        \label{fig:mnli-bow-histogram}
\end{figure}

\begin{figure}[!t]
         \centering
         \includegraphics[width=\columnwidth]{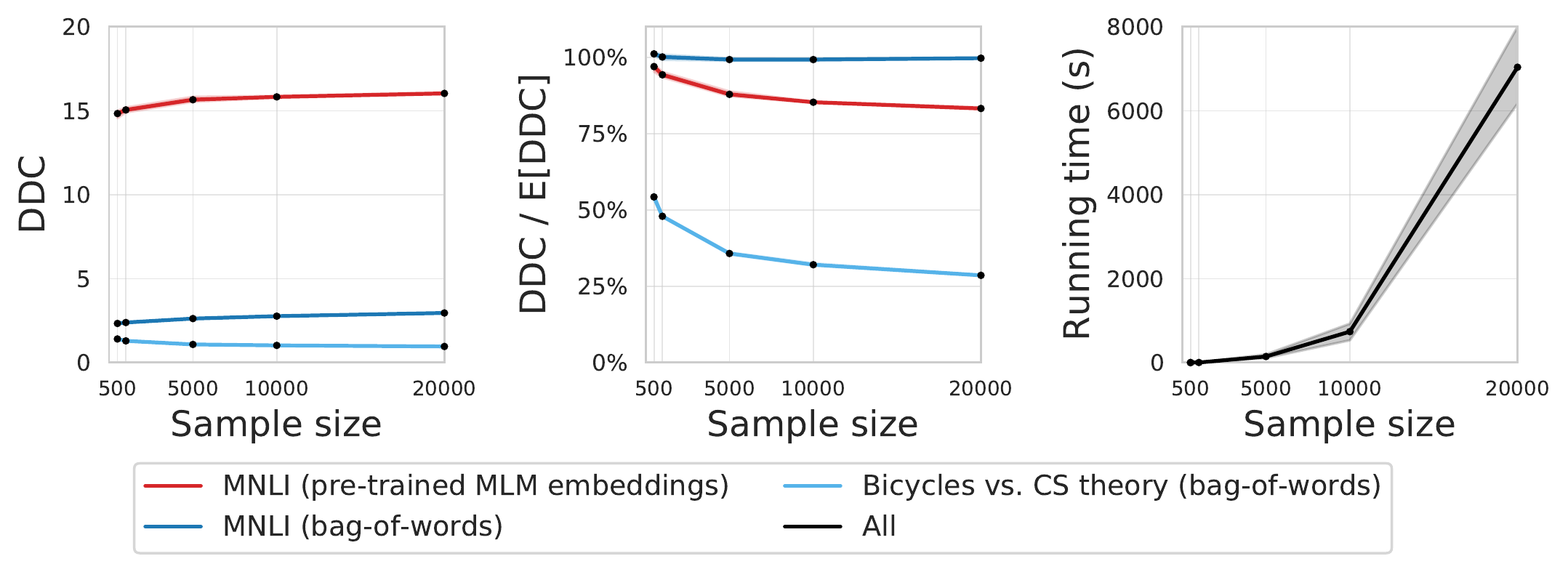}
        \caption{DDC (left) and ratio of real DDC to average random DDC (middle) are relatively stable across different-sized samples of large datasets, though larger samples incur increased 
        running time (right).}
        \label{fig:sample-size-comparison}
\end{figure}

\paragraph{Subsampling is effective for large datasets.}

Calculating DDC requires matrix operations that scale more than quadratically in the number of data points \cite{pan1999complexity}, which are prohibitive to compute exactly for large datasets.
Truncated eigendecompositions are tempting but may underestimate complexity for extremely difficult datasets; we leave exploration of truncated approximations to future work.
We recommend instead calculating DDC for a random subsample.
For experiments in this work we fix $n=20,\!000$; eigendecompositions complete in a few hours. We find that smaller values of $n$ can be much more efficient, are relatively accurate, and do not change the relative ordering for comparisons (Figure~\ref{fig:sample-size-comparison}).
We have not yet evaluated the impact of unbalanced classes.

\begin{figure}[!t]
         \centering
         \includegraphics[width=\columnwidth]{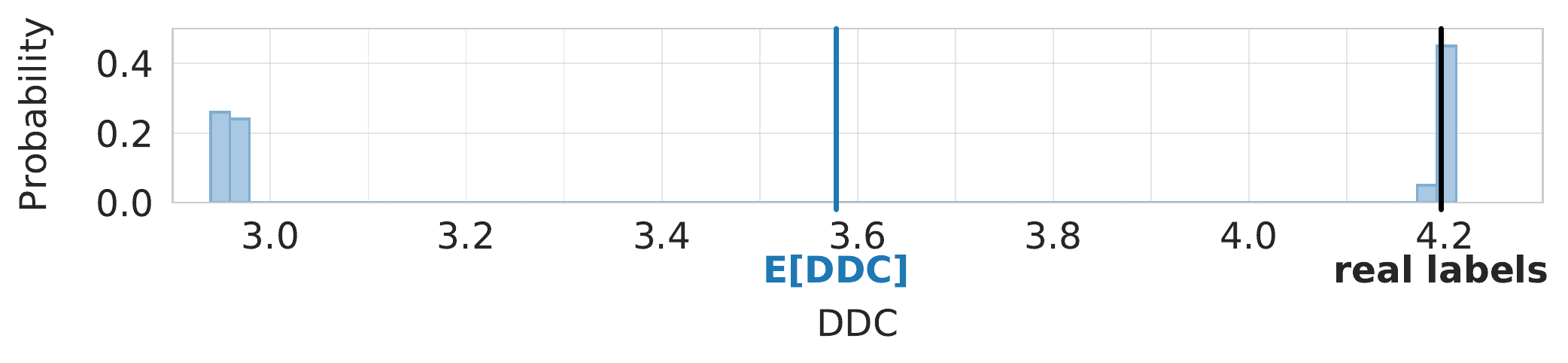}
        \caption{Two documents with nearly-identical representations entirely determine DDC of random labelings for a sample of the MNLI dataset as bags-of-words.}
        \label{fig:mnli-bow-degenerate}
\end{figure}

\paragraph{DDC distributions identify pathological cases.}
DDC can reveal potentially problematic characteristics of datasets that may not be evident under typical use.
Figure~\ref{fig:mnli-bow-degenerate} shows results for an MNLI subset with bag-of-words representations that is almost identical to the one used in Figure~\ref{fig:mnli-bow-histogram}, but the histogram of random labeling DDCs is bimodal.
This dataset contains two documents that have identical bag-of-words representations but different labels.
When these documents are randomly assigned the same label, DDC is just under 3.0, as in the other subset. But when they are assigned opposite labels (as in the real labeling) the documents by themselves are enough to increase DDC to 4.2, because they add weight to a low-ranked eigenvector with a very large inverse eigenvalue.
We see this sensitivity as a feature in a setting where we are using DDC as a diagnostic tool.
In our experiments we filter out duplicates, e.g., fewer than 0.2\% in SNLI.



\section{Experiments}

\paragraph{DDC supports comparisons between datasets and alternative labelings.}
We begin by demonstrating that our method reveals the relationship between data and labels by evaluating multiple labelings for two simple classification datasets with Stack Exchange posts represented as bags of words. Our goal is to determine the extent to which each labeling is aligned with the data.
Both datasets comprise documents from two English-language Stack Exchange communities released in the Stack Exchange Data Dump \cite{stackexchange}.
First, we choose two communities we expect to be easily distinguishable based on vocabulary: Bicycles and CS theory.
Second, we choose two communities we expect to be more difficult to distinguish: CS and CS theory.
For both datasets, we consider three valid ways to partition the data, which we also expect to be from easier to harder:
1) Community: each document is labeled with the community to which it was posted. 2) Year: each document is labeled with whether it was posted in the years 2010-2015 or 2016-2021 (both ranges inclusive). 3) AM/PM: each document is labeled with whether its timestamp records that it was posted in the hours 00:00-11:59 or 12:00-23:59.
For both datasets, we sample 20,000 documents so that each labeling assigns half the dataset to each class.
See Appendix~\ref{sec:app-pre-processing} for more details.

\begin{figure}[!t]
         \centering
         \includegraphics[width=\columnwidth]{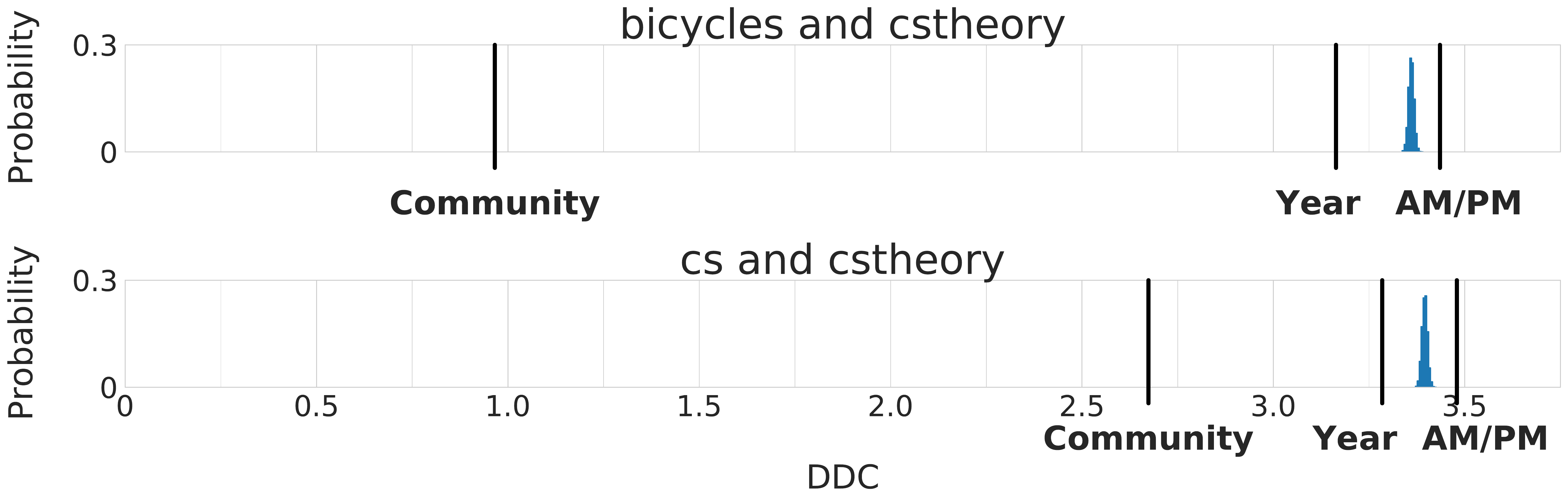}
        \caption{Comparing different valid labelings of Stack Exchange documents: labeling by community is the most different from random labelings (blue histogram). Labeling by year is still salient, but AM/PM labels are not aligned with patterns in the documents.}
        \label{fig:stackexchange}
\end{figure}

Figure~\ref{fig:stackexchange} shows DDC for both datasets using the three valid labelings and the distribution over DDC for random labelings.
As hypothesized, the DDC of the community labeling is much lower than that of random labelings for both tasks.
What's new, however, is that our method quantifies the differences in difficulty without training any classifiers: when comparing Bicycles posts to CS theory posts, the real labeling is 375 standard deviations of the random distribution below the average random DDC, but the same distance is 98 standard deviations when comparing CS and CS theory.
Surprisingly, for both tasks the AM/PM labeling in fact has higher DDC than all of the random labelings we sampled. It is more than 10 standard deviations from the average DDC of random labelings for both. We hypothesize that this labeling is unusually well balanced relative to the
actual differences in documents.

\paragraph{NLI experiment details.}


In the previous experiment we kept the data fixed and compared different labelings. Here we keep labelings fixed and compare alternative data representations.
We aim to disentangle how pre-training, fine-tuning, and the final classification step contribute to performance on natural language inference (NLI) tasks.
Our protocol for measuring data-label alignment of a dataset is: 1) choose a set of representations to compare and remove any examples that are identical under any representation, 2) for each representation: sample up to 20,000 examples from the dataset and construct the Gram matrix, 3) calculate DDC of the real labeling and DDC of random labelings, 4) compare the gap between DDC of real and random labelings across representations.
This process can be repeated across subsamples of the dataset.

We analyze training data from three English-language datasets from the GLUE benchmark \cite{wang2018glue}: MNLI  \cite{N18-1101}, QNLI \cite{rajpurkar2016squad}, and WNLI \cite{levesque2012winograd}; along with an additional fourth dataset: SNLI \cite{bowman2015large}. We use the entailment and contradiction classes. Pre-processing is in Appendix~\ref{sec:app-pre-processing}.
For baseline data representations, we use localist bag-of-words and GloVe embeddings after concatenating the two sentences in each NLI example. For GloVe, each word is represented by a static vector, and word vectors are averaged to produce one vector for the entire sentence.
We use pre-trained contextual embeddings from BERT \cite{devlin2019bert} and RoBERTa-large \cite{liu2019roberta}.
We also use contextual embeddings from RoBERTA-large fine-tuned on each of MNLI, QNLI, and SNLI. For each fine-tuning, we follow \citet{le2020adversarial}: pick a random 10\% of the training dataset, fine-tune with that sample, and then discard the sample from future analyses. This allows us to evaluate the effects of fine-tuning without trivially examining data used for fine-tuning.
For all MLM representations, each document is represented by the final hidden layer of the \texttt{[CLS]} token, as is standard.
Models were implemented using Hugging Face's Transformers library \cite{wolf-etal-2020-transformers} with NumPy \cite{harris2020array} and PyTorch \cite{paszkepytorch}.

To compare the complexity of a real labeling with the distribution of complexities from random labelings, we focus on two metrics:
1) the ratio of the real labeling's DDC to the average DDC over random labelings:  $\frac{\text{DDC}}{\Ex \left[ \text{DDC} \right]}$, and
2) the number of standard deviations of the random DDC distribution that the real DDC is from the average DDC over random labelings (z-score): $\frac{\text{DDC} - \Ex \left[ \text{DDC} \right]}{\sigma}$.
The first compares how far real DDC is from the average random DDC in terms of percentage, and the second compares how far real DDC is from the average random DDC in terms of the distribution.


\begin{figure}[!t]
         \centering
         \includegraphics[width=\columnwidth]{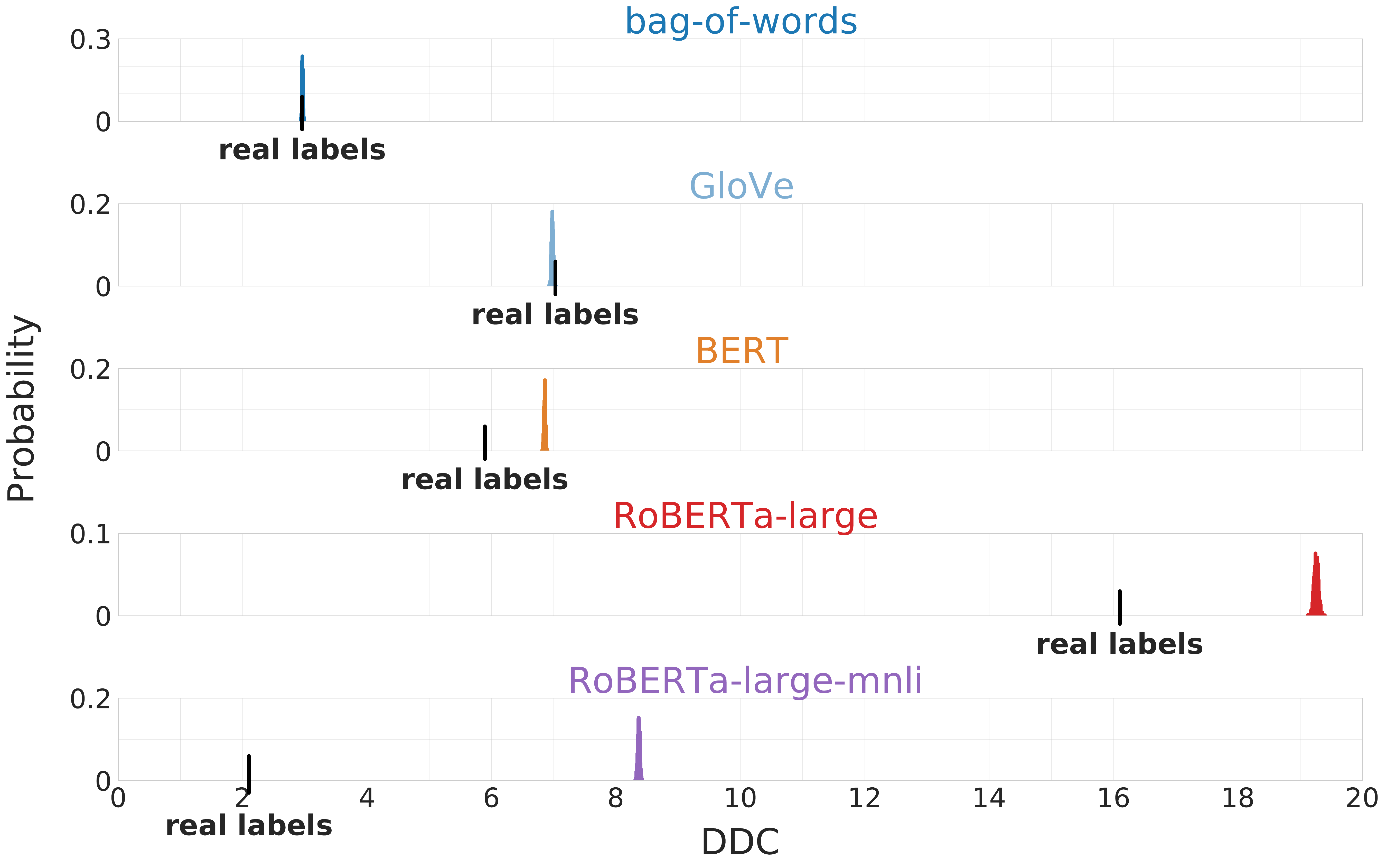}
         \begin{subfigure}[b]{0.24\linewidth}
         \centering
         \includegraphics[height=0.59in]{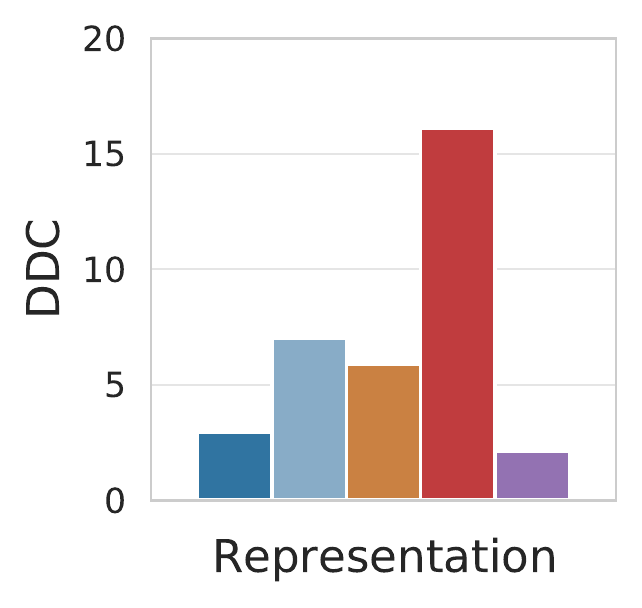} 
         \caption{}
         \label{fig:mnli-run-1-results-a}
     \end{subfigure}
     \hfill
     \begin{subfigure}[b]{0.24\linewidth}
         \centering
         \includegraphics[height=0.59in]{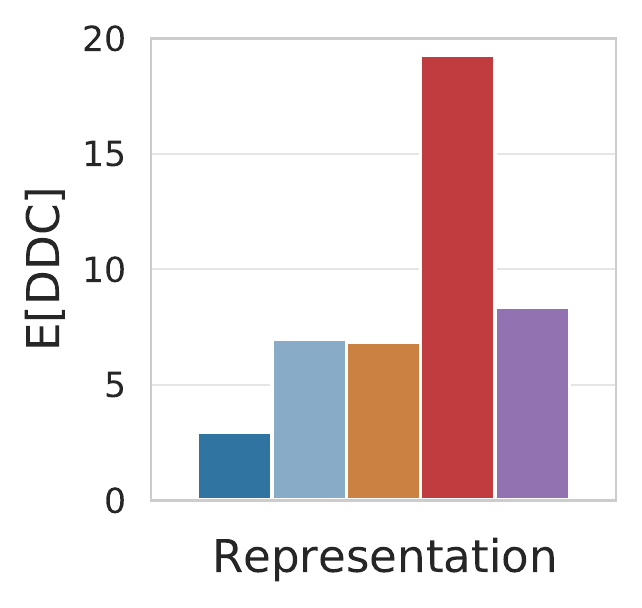} 
         \caption{}
         \label{fig:mnli-run-1-results-b}
     \end{subfigure}
     \hfill
     \begin{subfigure}[b]{0.24\linewidth}
         \centering
         \includegraphics[height=0.59in]{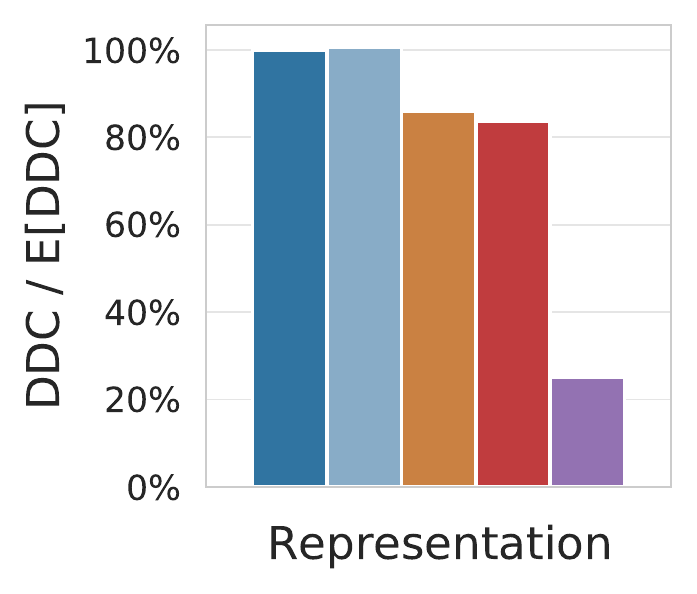} 
         \caption{}
         \label{fig:mnli-run-1-results-c}
     \end{subfigure}
     \begin{subfigure}[b]{0.24\linewidth}
         \centering
         \includegraphics[height=0.59in]{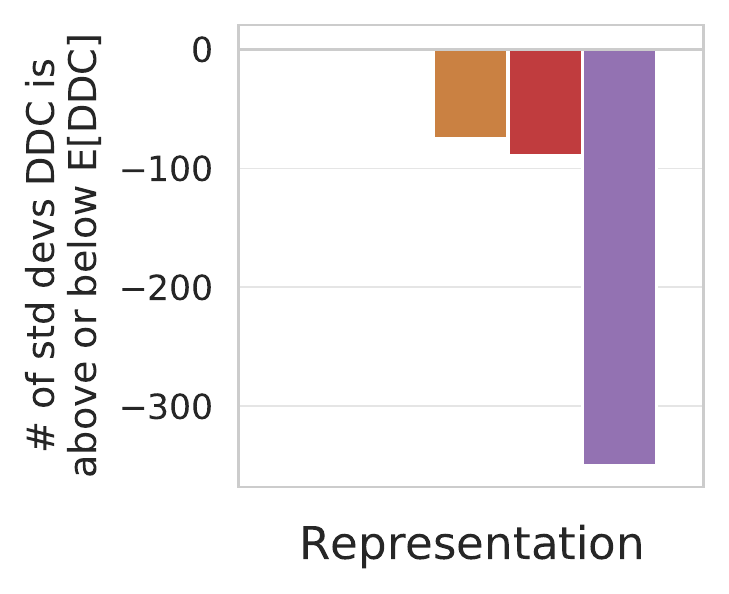} 
         \caption{}
         \label{fig:mnli-run-1-results-d}
     \end{subfigure}
        \caption{Comparisons of real and random DDCs for one subsample of MNLI: pre-trained and fine-tuned MLM representations distinguish between real and random labelings. Summary results for (a) real values, (b) mean of random values, (c) ratio of real to average random, and (d) z-score of real from average random.}
        \label{fig:mnli-all-distributions}
\end{figure}

\begin{figure*}[!t]
    \hfill
     \begin{subfigure}[t]{0.49\linewidth}
         \centering
         \includegraphics[width=\textwidth]{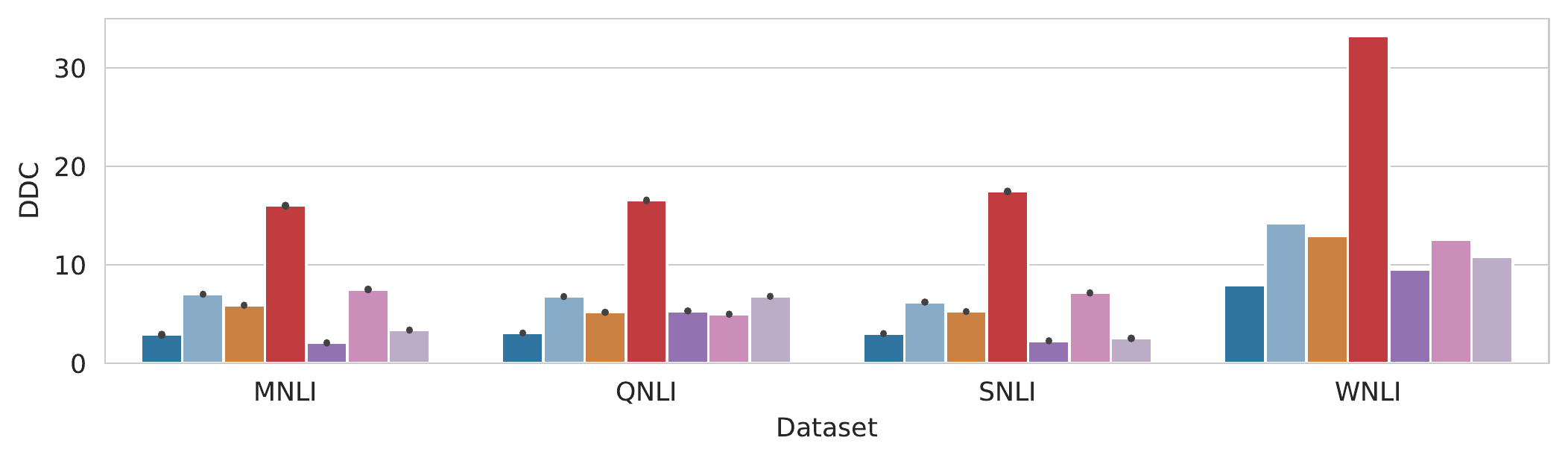} 
         \caption{DDC of real labelings varies by representation \ldots}
         \label{fig:nli-results-ddc}
     \end{subfigure}
     \hfill
     \begin{subfigure}[t]{0.49\linewidth}
         \centering
        \includegraphics[width=\textwidth]{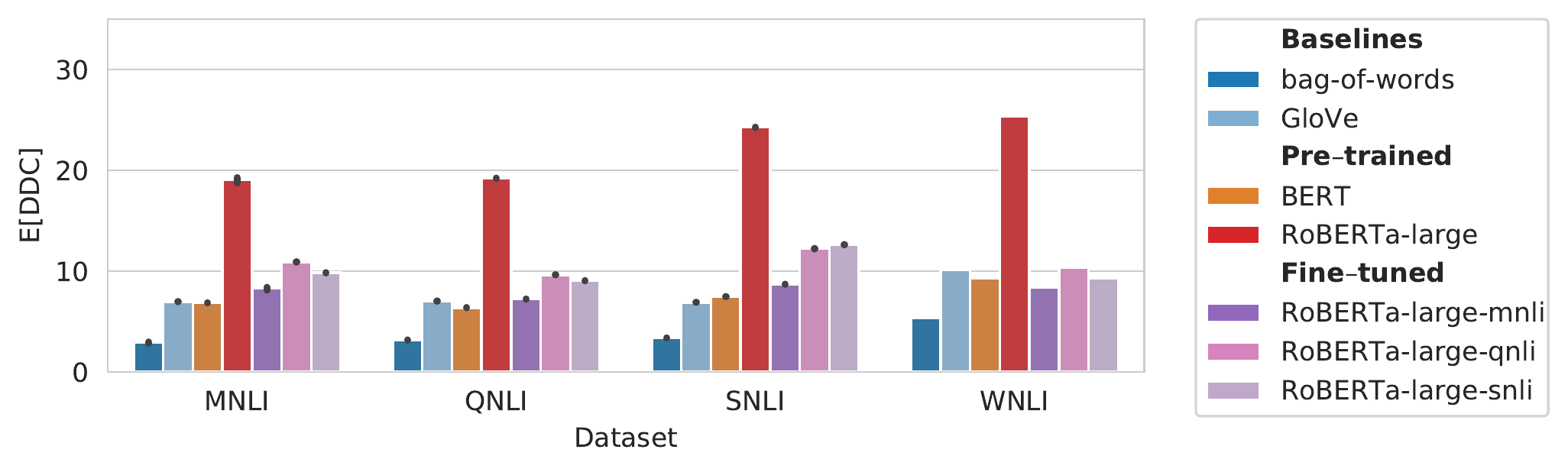}

         \caption{\ldots but average DDC for random labelings also varies in similar ways---mostly due to eigenvalue concentrations.}
         \label{fig:nli-results-ddc-random-expectation}
     \end{subfigure}
     
     \begin{subfigure}[b]{0.49\linewidth}
         \centering
         \includegraphics[width=\textwidth]{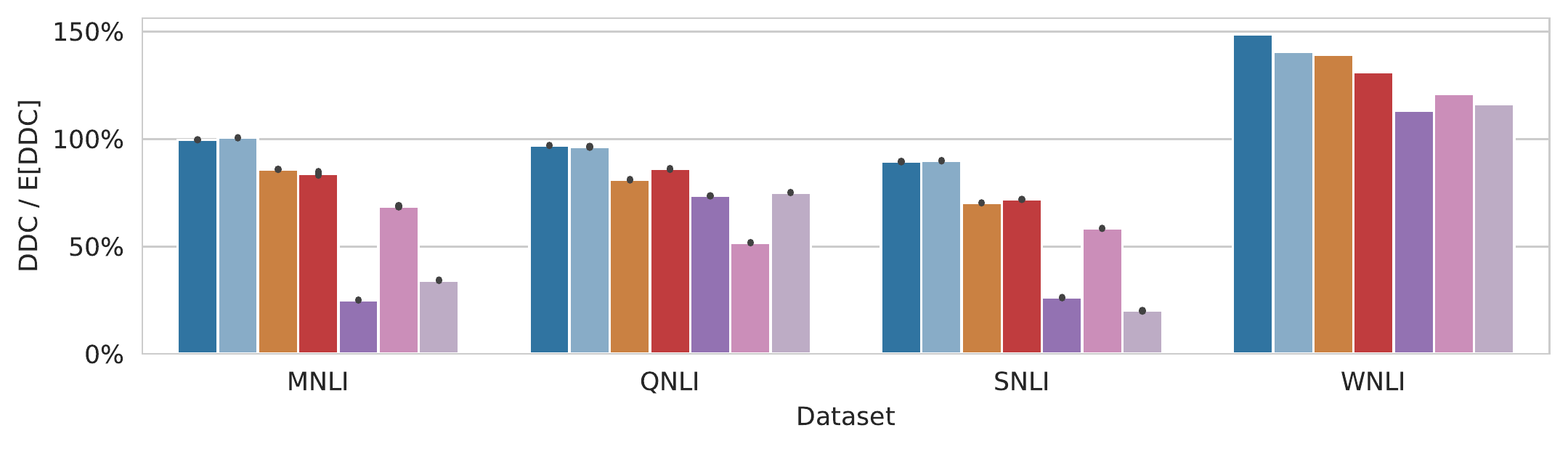} 
         \caption{The ratio of observed real DDC to average random DDC gives a more accurate perspective.}
         \label{fig:nli-results-ddc-normalized}
     \end{subfigure}
     \hfill
     \begin{subfigure}[b]{0.49\linewidth}
         \centering
         \includegraphics[width=\textwidth]{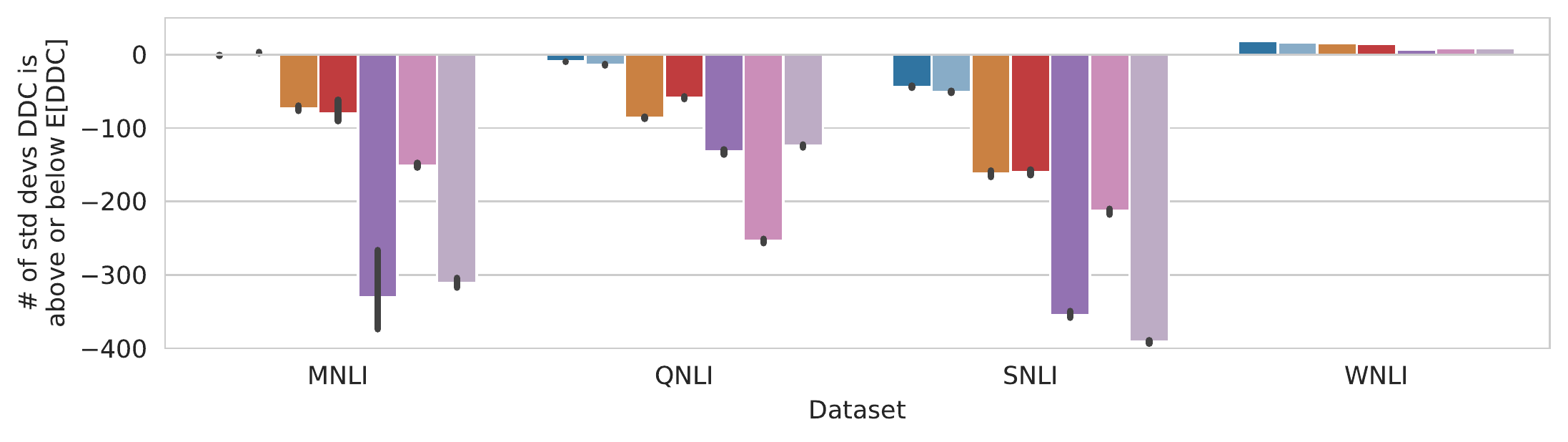} 
         \caption{The z-score shows even more difference when measuring the real labeling against the distribution of random values.}
         \label{fig:nli-results-ddc-distribution-distance}
     \end{subfigure}
     
     
        \caption{Comparisons between real and random labelings across representations and datasets. Error bars are across four replications. WNLI does not have error bars because it is small enough that it does not need to be subsampled. MNLI, QNLI, and SNLI are similar. WNLI is not well-aligned with any representation we consider.}
        \label{fig:nli-results}
\end{figure*}


\paragraph{DDC supports comparisons of representations for NLI.}
Figure~\ref{fig:mnli-all-distributions} compares the DDC of real labels to the distributions of DDCs for random labelings for a subsample of 20,000 documents from MNLI.
MNLI was specifically designed to thwart lexical approaches, so we expect bag-of-words-based methods to do poorly.
Indeed, the two baseline representations---bag-of-words and GloVe---do not distinguish between real and random labelings (the top row is identical to Figure~\ref{fig:mnli-bow-histogram} but at a different scale).
In fact, for GloVe embeddings, the real labeling is \textit{more} complex than all sampled random labelings.
For both pre-trained MLM embeddings, the value of DDC is greater than that of the bag-of-words representation, but the DDC of random labelings increases even more, leading to a much wider gap between real and random labelings and lower \textit{proportional} DDC.\footnote{We also found that pre-trained RoBERTa embeddings had near-identical results as RoBERTa-large embeddings.}
As shown in Figure~\ref{fig:pca-projection-dashboard}, while pre-trained RoBERTa-large embeddings enable training classifiers with greater accuracy, they also have a much faster drop-off in eigenvalues. DDC is therefore higher for this representation because the labels project onto ``noise'' eigenvectors with small eigenvalues.
{ But random labelings have even greater projections onto noise eigenvectors, so there is a large gap between real and random DDCs.}
Fine-tuned contextual embeddings show the largest drop in relative DDC, as well as the lowest absolute DDC for real labels.
This result is consistent with empirical dev-set accuracies for pre-trained RoBERTa-large and fine-tuned RoBERTa-large-mnli embeddings of 67.1\% and 96.6\%, respectively, but it provides additional quantitative perspective that does not rely on stochastic gradient descent algorithms.

Comparing to \textit{distributions} of DDC values for random labelings, rather than just the mean, also provides a new perspective.
If we were to just consider the ratio of real DDC to average random DDC (Figure~\ref{fig:mnli-run-1-results-c}), we would conclude, for example, that real labels for BERT have 80\% of the DDC of random labelings, on average.
But when we reconsider the distance between the real DDC and average random DDC in terms of the standard deviation of the distribution (Figure~\ref{fig:mnli-run-1-results-d}), we see that the real DDC is 71 standard deviations away from the average.
Additionally, BERT has much lower DDC for real labels than RoBERTa-large. When viewed in the context of the distribution, however, we can see that BERT and RoBERTa-large representations are nearly equal at distinguishing real from random labelings.
Surprisingly, for pre-trained and fine-tuned representations, every sampled random labeling had higher complexity than the real labeling; the probability that a random labeling has a lower DDC than the real labeling is at most $0.001$.

\paragraph{Comparisons between representations across datasets.}
In addition to comparing alternative representations for a single dataset, we can compare representations between datasets.
This method measures the ability of a neural network to produce representations that are aligned with multiple, slightly different tasks.
We repeat the previous experiment for all four NLI datasets, this time calculating complexities of  multiple $20,\!000$-sample subsets of each dataset.
For a given sample of a dataset, we compare the complexities across all data representations.
Figure~\ref{fig:nli-results} shows that the trends that we saw in one sample of MNLI are borne out across multiple samples of MNLI and across QNLI and SNLI.
We use the same baseline and pre-trained representations as before, but also include the output of RoBERTa-large models fine-tuned on a discarded subset of MNLI, QNLI, and SNLI. 

Our first result is that MNLI is unusually difficult for purely lexical methods. 
While the complexity of real labels is closest to the average random DDC for bag-of-words and GloVe representations, for QNLI and especially SNLI, we are still able to distinguish between real and random labels.
For QNLI, the real DDCs are separated a small amount from the average random DDC in absolute terms, but this is nearly 10 standard deviations. The separation is even more pronounced for SNLI, where the real complexity is separated from the average random complexity by 43 and 50 standard deviations for bag-of-words and GloVe, respectively. This result is surprising because the NLI entailment and contradiction classes should not \textit{a priori} be associated with lexical patterns.
It provides further evidence for previous findings that lexical and hypothesis-only approaches can achieve high accuracy on NLI datasets \cite{gururangan2018annotation}.

DDC for pre-trained representations appears different between BERT and RoBERTa-large, with BERT closer to GloVe, but this difference disappears when comparing to  random labelings.
While BERT and RoBERTa-large differ in their eigenvalue distributions, we cannot reliably distinguish them in terms of relative alignment with task labels.

Fine-tuned representations are significantly better aligned with labels than pre-trained embeddings.
For MNLI, QNLI, and SNLI, the pre-trained embeddings separate real and random DDCs more than the baseline representations, even when the baseline representations already achieve some separation.
As expected, fine-tuned representations distinguish the most between real and random labels.
What's new is that our method quantifies the extent of the increased alignment. Fine-tuned embeddings more than double the gap between real and random labelings beyond that of pre-trained embeddings, when measured by either ratio or standard deviations.
In addition, we see some evidence of transfer learning: representations from networks fine-tuned on one NLI dataset have greater alignment with labels on the other datasets than pre-trained representations do.
But we find that MNLI and SNLI are more able to transfer to each other, while QNLI appears significantly different.

We were surprised to find how unlike WNLI is to the other datasets we consider: even contextual embeddings are not significantly more aligned with the real labeling than the baseline representations. There are many alternative labelings that are more aligned with the structure of the data, which accords with WNLI's purpose as a hand-crafted challenge dataset \cite{levesque2012winograd}.
Our experiments suggest that fine-tuned RoBERTa's 91.3\% accuracy on WNLI \cite{liu2019roberta} comes from updating the representations during their multi-task fine-tuning and the high capacity of the classification head. Pre-training alone is not enough.

\paragraph{DDC helps guide MLM embedding choices.}
Finally, we present a case study in which data-label alignment provides guidance in modeling choices.
MLM-based embeddings have become standard in NLP, but there remain many practical questions about \textit{how} users should apply them.
For example, users may be concerned about
how the output of a network should be fed to subsequent layers.
For BERT-based models we often use the embedding of the \texttt{[CLS]} token as a single representation of a document, but \citet{miaschi-dellorletta-2020-contextual} empirically evaluate contextual embeddings from averaging hidden token representations.
In Figure~\ref{fig:embedding-comparisons}, we compare the alignment with NLI task labels of 1) the \texttt{[CLS]} token, 2) taking the mean of the final hidden layer, and 3) concatenating the final hidden layer across tokens.
We find no difference that is consistent across models and datasets, though \texttt{[CLS]} is modestly better at distinguishing real and random labelings for MNLI and SNLI. Any observed difference is less significant than the choice to fine-tune and would not change the relative order of models.
In this case, users should feel confident that there is no strong alignment advantage for NLI classification to choosing one representation over another.

\begin{figure}[!t]
         \centering
         \includegraphics[width=\columnwidth]{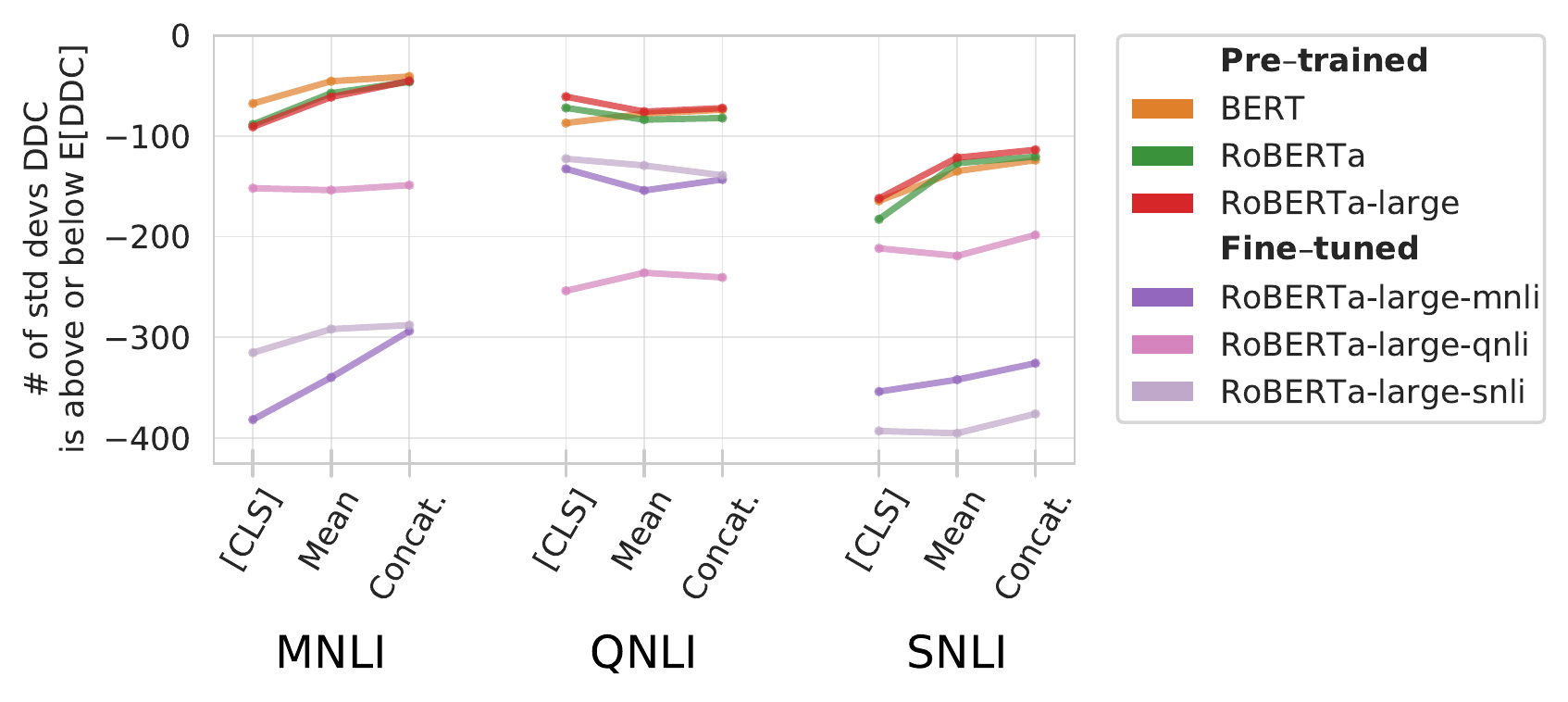}
        \caption{
        We find no consistent advantage between 1) the hidden embedding of just the \texttt{[CLS]} token, 2) averaging the final hidden embedding of all tokens, and 3) concatenating the final hidden embedding of all tokens.
        }
        \label{fig:embedding-comparisons}
\end{figure}



\section{Related Work}

We see this work as part of a more general increase in quantitative evaluations of datasets.
\citet{swayamdipta2020dataset} identify subsets of NLI datasets that are hard to learn by analyzing how a classifier's predictions change over the course of training.
\citet{sakaguchi2020winogrande} and \citet{le2020adversarial} similarly filter NLI datasets by finding examples frequently misclassified by simple linear classifiers.
Rather than find difficult examples within a dataset, we seek to understand how different data representations impact task difficulty.
Our results complement \citet{le2020adversarial} in finding that categories containing more dissimilar examples are more complex.
They find that filtering data under the RoBERTa representation leads to the most difficult reduced dataset; our work suggests this might be because contextual embeddings better differentiate the entailment and contradiction classes in NLI tasks than baseline representations.

Minimum description length (MDL)
treats a classifier as a method for data compression and has recently been used to measure the extent to which representations learned by MLMs capture linguistic structure
\cite{voita-titov-2020-information}.
\citet{perez2021rissanen} use MDL to determine if additional textual features are expected to lead to easier tasks, finding, for instance, that providing answers to sub-questions in question-answering tasks decreases dataset complexity.
In contrast, our work
investigates the relationship between different representations of fixed data and labelings.
\citet{mielke2019kind} also use information-theoretic tools (surprisal/negative log-likelihood) to empirically evaluate differences in language model effectiveness across languages.

This work also relates to prior work on evaluating the capabilities of embeddings.
Non-contextual dense vector word representations encode syntax and semantics \cite{pennington2014glove, mikolov2013} as well as world information like gender biases \cite{bolukbasi2016man}.
Linguistic probing and comparison of MLM layers and representations has identified specific capabilities of MLMs \cite{tenney2019you, liu-etal-2019-linguistic, rogers2020primer}.
Many works have identified and proposed antidotes to the anisotropy of non-contextual word embeddings \cite{arora2016simple, mimno2017strange, mu2017all, li-etal-2020-sentence}. \citet{conneau2020emerging} use a kernel approach to compare embeddings learned by MLMs pre-trained on different languages.
While analyzing the geometry of contextual embedding vectors
remains an active line of work \cite{reif2019visualizing, ethayarajh-2019-contextual}, we instead analyze the relationship between embeddings and the labels of downstream tasks.


\section{Conclusion}

We present a method for quantifying the alignment between a representation of data and a set of associated labels.
We argue that the difficulty of a dataset is a function of the alignment between the chosen representation and a labeling, relative to the distribution of alternative labelings, and we demonstrate how this method can be used to compare different text representations for classification.
{ We used NLI datasets as well-understood case studies in order to demonstrate our method, which replicates results from less general methods that were surprising when introduced, such as \citet{gururangan2018annotation} on lexical mutual information.
Our method supplements traditional held-out test set accuracy: while accuracy answers \textit{which} representations enable high performance for a task, our approach offers more explanation of \textit{why}.}

{ We hope that future work can study novel datasets and settings.}
Our method can be readily applied to new datasets, and it could especially be used to quantify the difficulty of  adversarially constructed datasets like WinoGrande \cite{sakaguchi2020winogrande} and Adversarial NLI \cite{nie2020adversarial}.
Our method could be used to measure data-label alignment while changing the information present in each document, as in the recent work of \citet{perez2021rissanen}.
The method can also be extended to both other classification models by analyzing Gram matrices produced by different similarity kernels
and to multi-class, imbalanced, and noisy settings where uniformly-random labelings are not as applicable.
More theoretical work could provide further generalization guarantees.

Our method provides a new lens that can be used in multiple ways.
NLP practitioners want to design models that achieve high accuracy on specific tasks, and our method can identify which representation most aligns with the task's labeling and whether certain processing steps are useful.
Dataset designers, on the other hand, often seek to provide challenging datasets in order to spur new modeling advances \cite{wang2018glue, wang2019superglue, sakaguchi2020winogrande}. Our method helps diagnose when current text representations do not capture the variation in a dataset, as we showed for the case of WNLI, necessitating richer embeddings. It can also indicate when datasets are not robust to patterns in existing embeddings, as we found with SNLI and QNLI aligning with baseline lexical features.

Finally, while empirical exploration has been an effective strategy in NLP, better theoretical analysis may reveal simpler yet more powerful and more explainable solutions \cite{saunshi2020mathematical}.
Applying theory-based analysis to \textit{representations} rather than to NLP models as a whole offers a way to benefit immediately from such perspectives without requiring full theoretical analyses of deep networks.




\section*{Acknowledgments}

We thank Maria Antoniak, Katherine Lee, Rosamond Thalken, Melanie Walsh, and Matthew Wilkens for thoughtful feedback. We also thank Benjamin Hoffman for fruitful discussions of analytic approaches. 
This work was supported by NSF \#1652536.

\section*{Ethical Considerations}

We do not anticipate any harms from our use of publicly available Stack Exchange datasets and NLI datasets.
While this work analyzes the text representations produced by large masked language models, we do not anticipate any harms from the method presented in this work.
We fine-tune these large models in order to calibrate our method, but our method can be used on already-trained models and does not necessitate additional training.
We believe that more analytical and interpretive work like ours can better guide empirical computation-intensive research.

\bibliography{emnlp2021}
\bibliographystyle{acl_natbib}

\clearpage

\appendix

\section{Dataset Pre-Processing}
\label{sec:app-pre-processing}

\subsection{Natural Language Inference}

\begin{table}[b!]
\resizebox{\linewidth}{!}{%
\begin{tabular}{l S[table-format=6.0] S[table-format=5.0] S[table-format=6.0] S[table-format=6.0]}
\toprule
 & MNLI & QNLI & SNLI & WNLI\\
\hline
Duplicates removed & 721 & 198 & 748 & 4\\
10\% used in fine-tuning & 39270 & 10474 & 54936 & n/a \\
Unique docs excluded & 39922 & 10653 & 55586 & 4 \\
Total examples used & 234122 & 92649 & 330080 & 631 \\
\bottomrule
\end{tabular}
}
\caption{Summary statistics for NLI datasets.}
\label{table:nli-datasets}
\end{table}

We compare representations of English-language natural language inference datasets from the GLUE benchmark \cite{wang2018glue}: MNLI  \cite{N18-1101}, QNLI \cite{rajpurkar2016squad}, and WNLI \cite{levesque2012winograd}. We also use SNLI \cite{bowman2015large}. Table~\ref{table:nli-datasets} shows summary statistics. Starting from documents labeled entailment or contradiction, we exclude any documents that are string identical (surprisingly, these datasets do contain a few duplicates) and any that are identical under any representation that we consider. Practically, this means excluding documents that have identical bag-of-words representations (because of different word order) or identical GloVe embeddings (because of words/numbers not in the GloVe vocabulary). We fine-tune on 10\% of each training set and exclude those documents as well, as in \citet{le2020adversarial}.

\subsection{Stack Exchange}

\begin{table}[b!]
\resizebox{\linewidth}{!}{%
\begin{tabular}{l l S[table-format=6.0] S[table-format=5.0] S[table-format=6.0]}
\toprule
Year & AM/PM & \text{Bicycles} & \text{CS} & \text{CS Theory}\\
\hline
$[2010, 2015]$ & $[00:00-11:59]$ & 9872 & 11596 & 7661\\
$[2010, 2015]$ & $[12:00-23:59]$ & 17038 & 17371 & 11888\\
$[2016, 2021]$ & $[00:00-11:59]$ & 11370 & 22264 & 2890\\
$[2016, 2021]$ & $[12:00-23:59]$ & 18415 & 34408 & 4434\\[1mm]
\multicolumn{2}{l}{Total across all years and AM/PM} & 56695 & 85639 & 26873\\
\bottomrule
\end{tabular}
}
\caption{Label breakdowns for the three Stack Exchange communities we consider.}
\label{table:stack-exchange}
\end{table}

We construct text classification tasks from English-language posts on Stack Exchange communities \cite{stackexchange}. We construct two datasets, the first with posts from Bicycles and CS Theory and the second with posts from CS and CS Theory. Table~\ref{table:stack-exchange} shows summary statistics for each community. For each dataset, we sample $2,\!500$ posts from each community-year-AM/PM combination, resulting in $10,\!000$ documents from each community.
This results in three balanced classification tasks for each of the datasets, allowing us to compare their difficulty without label imbalance.

\section{Estimating $\Ex[\text{DDC}]$}
\label{sec:app-bound}

The expectation of DDC over uniformly random labelings can be accurately estimated by averaging the DDC of sampled labelings. The following claim shows that the number of samples required is determined by the difference between the inverses of the largest and smallest eigenvalues of the Gram matrix. 
Recall that $\mathbf{H}^\infty$ is this Gram matrix with number of examples $n$.
Note that Jensen's inequality can be used to upper-bound the expectation by $\sqrt{\left( \frac{2}{n} \right) \Tr \left[ (\mathbf{H}^\infty)^{-1} \right] }$, but it is not readily apparent how to compute the expectation exactly.

\begin{claim}
To estimate the expected DDC over random labelings to within $\varepsilon$ with probability at least $1 - \delta$, averaging DDC of the following number of sampled uniformly random labelings is sufficient:
\[m \geq \frac{\Delta^2}{2 \varepsilon^2}\ln \left( \frac{2}{\delta} \right)\]
where $\Delta$ is the difference between the maximum and minimum DDC values.
\end{claim}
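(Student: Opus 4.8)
The plan is to recognize $\text{DDC}(\mathbf{y})$, viewed as a function of a uniformly random labeling $\mathbf{y}\in\{-1,+1\}^n$, as a bounded random variable, and then to apply Hoeffding's inequality to the empirical average of $m$ independent such samples. The claim is then just the standard translation of the Hoeffding tail bound into a sample-complexity statement, with the range parameter instantiated as $\Delta$.

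First I would establish boundedness. After duplicate documents are filtered out (as the paper prescribes), $\mathbf{H}^\infty$ is symmetric positive definite, hence so is $(\mathbf{H}^\infty)^{-1}$, and for every labeling the Rayleigh quotient obeys
\[
\lambda_{\min}\big((\mathbf{H}^\infty)^{-1}\big)\,\|\mathbf{y}\|_2^2 \;\leq\; \mathbf{y}^\intercal (\mathbf{H}^\infty)^{-1}\mathbf{y} \;\leq\; \lambda_{\max}\big((\mathbf{H}^\infty)^{-1}\big)\,\|\mathbf{y}\|_2^2 .
\]
Since $\|\mathbf{y}\|_2^2 = n$ for $\pm1$ labels, every DDC value — and in particular the maximum and minimum attained over the finitely many labelings — lies in the interval $\big[\sqrt{2/\lambda_{\max}(\mathbf{H}^\infty)},\,\sqrt{2/\lambda_{\min}(\mathbf{H}^\infty)}\big]$. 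Thus the range $\Delta$ of DDC over random labelings is finite and bounded above by the difference between the inverses of the extreme eigenvalues (up to the square root), which is the informal statement the text attributes to it. For the claim itself we only need that $\Delta<\infty$; the eigenvalue bound is a convenient, computable estimate of it.

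Next I would invoke Hoeffding's inequality: if $X_1,\dots,X_m$ are the DDC values of $m$ labelings drawn independently and uniformly from $\{-1,+1\}^n$, they are i.i.d., each almost surely confined to an interval of width at most $\Delta$ with common mean $\Ex[\text{DDC}]$, so
\[
\Pr\!\left[\,\Big|\tfrac{1}{m}\textstyle\sum_{i=1}^m X_i - \Ex[\text{DDC}]\Big| \geq \varepsilon \,\right] \;\leq\; 2\exp\!\left(-\frac{2 m \varepsilon^2}{\Delta^2}\right).
\]
Setting the right-hand side to be at most $\delta$ and solving for $m$ yields $m \geq \frac{\Delta^2}{2\varepsilon^2}\ln(2/\delta)$, exactly the stated bound. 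There is no real obstacle here: the only point requiring care is the hypothesis that $\mathbf{H}^\infty$ be invertible so that DDC is defined and bounded (guaranteed by de-duplication), and the only modeling choice is that the sampled labelings be drawn independently and uniformly, so that Hoeffding applies verbatim. Optionally I would close by substituting the explicit bound $\Delta \leq \sqrt{2/\lambda_{\min}(\mathbf{H}^\infty)} - \sqrt{2/\lambda_{\max}(\mathbf{H}^\infty)}$ to make the required number of samples computable directly from the spectrum of the Gram matrix.
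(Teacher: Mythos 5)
Your proposal is correct and follows essentially the same route as the paper: bound the range of DDC by the extreme eigenvalues of $\mathbf{H}^\infty$ (your Rayleigh-quotient argument is the same fact the paper states via projections onto the top and bottom eigenvectors), then apply Hoeffding's inequality to the i.i.d.\ sampled DDC values and solve for $m$. The only cosmetic difference is that the paper applies Hoeffding to the scaled summands $\tfrac{1}{m}X_i$ rather than quoting the averaged form directly, which yields the identical bound.
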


\begin{proof}
First, note that the DDC of a random labeling is bounded. Using the eigendecomposition interpretation of DDC, we find that the maximum DDC can be no greater than when the label vector $\mathbf{y}$ projects entirely on the Gram matrix's final eigenvector with smallest eigenvalue $\lambda_\text{min}$: 
\begin{align}
\DDC_\text{max} &\leq \sqrt{\frac{2 \Vert \mathbf{y} \Vert \Vert \mathbf{y} \Vert}{\lambda_{\text{min}}} \cdot \frac{1}{n} }\\
&= \sqrt{\frac{2 \sqrt{n} \sqrt{n}}{\lambda_{\text{min}}} \cdot \frac{1}{n} }
= \sqrt{\frac{2}{\lambda_\text{min}}}
\end{align}
Similarly, the minimum DDC can be no less than when the label vector projects entirely on the Gram matrix's initial eigenvector with largest eigenvalue: $\DDC_\text{min} \geq \sqrt{\frac{2}{\lambda_\text{max}}}$. Let $\Delta$ be the magnitude of this difference:
\begin{align}
\Delta &= \vert \DDC_\text{max} - \DDC_\text{min} \vert\\
&\leq \sqrt{\frac{2}{\lambda_\text{min}}} - \sqrt{\frac{2}{\lambda_\text{max}}}
\end{align}

Let $X_i$ be the DDC of the $i^\text{th}$ random labeling, and let $\bar{X} = \frac{1}{m} \left( X_1 + X_2 + \ldots + X_m \right)$ be the empirical mean of the first $m$ random DDCs. $\Ex\!\left[ \bar{X} \right]$ is then the true expectation of DDC over random labelings. Because $X_i \in \left[ \DDC_\text{min}, \DDC_\text{max} \right]$ is a bounded random variable, $\frac{1}{m} X_i$ is also bounded: $\frac{1}{m} X_i \in \left[ \frac{\DDC_\text{min}}{m}, \frac{\DDC_\text{max}}{m} \right]$ with difference between maximum and minimum values no greater than $\frac{\Delta}{m}$.
By the Hoeffding bound:
\begin{align}
\Pr \left[ \left| \bar{X} - \Ex \left[ \bar{X} \right] \right| \geq \varepsilon \right] \leq 2 \exp \left( \frac{-2 \varepsilon^2}{\sum\limits_{i \in [m]} \left( \frac{\Delta}{m} \right) ^2} \right)
\end{align}
Setting the right-hand side to be at most $\delta$ and solving for $m$ yields the stated bound on the number of required samples.
\end{proof}

\paragraph{How unlikely is the real labeling's DDC?}
Additionally, the probability that a random labeling has as low a complexity as the real labeling is given by the distribution function of random-label DDCs evaluated at the real DDC, denoted $F(\DDC)$.\footnote{Note that this isn't the same as \textit{any} labeling having as low a complexity, which could be found by union bounding over random labelings.} Let $\hat{F}(\DDC)$ be the empirical distribution function evaluated at the real DDC: the fraction of sampled random labelings with complexities less than that of the real labeling.
\citet{wasserman2013all} shows that the DKW Inequality can be used to bound the value of $F(\DDC)$ for the above number of samples $m$. With probability $1 - \delta$, $F(\DDC) \in [ \hat{F}(\DDC) - \gamma, \hat{F}(\DDC) + \gamma ]$ where:
\[\gamma = \sqrt{\frac{1}{2m} \ln \left( \frac{2}{\delta} \right)}\].

\section{Experimental Setup and Computing Infrastructure}

\paragraph{Classification results in Figure 1.}
We used two-layer fully-connected ReLU networks with 10,000 hidden units, as in \citet{arora2019fine}. We trained each network to convergence on the pictured 10,000 document-subset and evaluated on the standard dev split. We used an Intel Xeon CPU @ 2.00GHz with 27.3GB of RAM and an NVIDIA Tesla V100-SXM2 GPU.
Running times for reported runs varied between datasets, from 7.9 seconds for MNIST to 190 seconds for MNLI represented as bags-of-words. We report the best dev accuracy for learning rate $\text{lr} \in \{10^{-3}, 10^{-4}, 10^{-5}, 10^{-6}\}$.

\paragraph{Fine-tuning RoBERTa-large.}
We fine-tuned RoBERTa-large for MNLI, QNLI, and SNLI on ten percent of each dataset's training data. We used an Intel Core i7-5820K CPU @ 3.30GHz with two NVIDIA GeForce GTX TITAN X GPUs and 64 GB of RAM. We train for three epochs and hyperparameter search over initial learning rate $\textsf{lr} \in \{1e\text{--}5, 2e\text{--}5, 3e\text{--}5\}$, as in \citet{liu2019roberta}, with a fixed per-GPU batch-size of 16. For the fine-tuned contextual embeddings, we use the networks which attained highest accuracy on the dev splits. In all three cases, this was the network with $\textsf{lr} = 2e\text{--}5$. Training these models took 6076.9 seconds for MNLI, 1624.58 seconds for QNLI, and 8459.5 seconds for SNLI.

\paragraph{Comparing NLI representations.}
We compare baseline representations (bag-of-words and GloVe embeddings) and MLM contextual embeddings. For the MLM embeddings, we use BERT ($110 \times 10^6$ parameters), RoBERTa ($125 \times 10^6$ parameters), and RoBERTa-large ($355 \times 10^6$ parameters).

For calculating contextual embeddings from pre-trained and check-pointed fine-tuned MLMs, we used an Intel Core i7-5820K CPU @ 3.30GHz with an NVIDIA GeForce GTX TITAN X and 64 GB of RAM.
For eigendecompositions, we used an Intel Xeon Gold 6134 CPU @ 3.20GHz with 528 GB of RAM.
It took $215.44 \pm 98.84$ seconds to calculate contextual embeddings, $7530.95 \pm 2399.05$ seconds to construct the Gram matrix and perform the eigendecomposition, and $2954.48 \pm 1449.11$ seconds to sample random labels.

\end{document}